\tikzset{
  basic box/.style = {
    shape = rectangle,
    align = left,
    draw  = #1,
    fill  = #1!20,
    rounded corners,
    drop shadow},
  background box/.style = {
    shape = rectangle,
    align = center,
    fill=yellow!20,
    draw=black!80,
    dashed,
    inner xsep=4pt,
    inner ysep=10pt,
    rounded corners},
  header node/.style = {
    text depth    = +0pt,
    fill          = white,
    draw},
  header/.style = {%
    inner ysep = +1.1em,
    append after command = {
      \pgfextra{\let\TikZlastnode\tikzlastnode}
      node [header node] (header-\TikZlastnode) at (\TikZlastnode.north) {#1}
      node [span = (\TikZlastnode)(header-\TikZlastnode)]
        at (fit bounding box) (h-\TikZlastnode) {}
    }
  },
  hv/.style = {to path = {-|(\tikztotarget)\tikztonodes}},
  vh/.style = {to path = {|-(\tikztotarget)\tikztonodes}},
  fat blue line/.style = {ultra thick, blue}
}
\newtheorem{theorem}{Theorem}
\newtheorem{definition}{Definition}
\newtheorem{problem}{Problem}
\newcommand{\CA}[1]{\mathcal{#1}}
\newcommand{\BF}[1]{\mathbf{#1}}
\newcommand{\BB}[1]{\mathbb{#1}}
\newcommand{\hold}{\textrm{H}}
\newcommand{\notltl}{\neg}
\newcommand{\andltl}{\wedge}
\newcommand{\orltl}{\vee}
\newcommand{\PA}{\mathcal{P}}
\newcommand{\FA}{\mathcal{A}}
\newcommand{\TS}{\mathcal{T}}
\newcommand{\NA}{\mathcal{N}}
\newcommand{\GA}{\mathcal{G}}
\newcommand{\ra}{\rightarrow}
\title{\LARGE \bf
Decentralized Safe Reactive Planning under TWTL Specifications}
\author{Ryan Peterson, Ali Tevfik Buyukkocak, Derya Aksaray, and Yasin Yaz{\i}c{\i}o\u{g}lu
\thanks{*This work was supported by Honeywell Aerospace and MnDRIVE.}
\thanks{R. Peterson, A.T. Buyukkocak, and D. Aksaray are with the Department of Aerospace Engineering and Mechanics, University of Minnesota, Minneapolis, MN, 55455, {\tt\small pete9936@umn.edu, buyuk012@umn.edu, daksaray@umn.edu}, and Y. Yaz{\i}c{\i}o\u{g}lu is with the Department of Electrical and Computer Engineering, University of Minnesota, Minneapolis, MN, 55455, {\tt\small ayasin@umn.edu},}
}
\begin{document}
\bibliographystyle{IEEEtran} 
\maketitle
\thispagestyle{empty}
\pagestyle{empty}

\begin{abstract}
We investigate a multi-agent planning problem, where each agent aims to achieve an individual task while avoiding collisions with others. We assume that each agent's task is expressed as a Time-Window Temporal Logic (TWTL) specification defined over a 3D environment. We propose a decentralized receding horizon algorithm for online planning of trajectories. We show that when the environment is sufficiently connected, the resulting agent trajectories are always safe (collision-free) and lead to the satisfaction of the TWTL specifications or their finite temporal relaxations. Accordingly, deadlocks are always avoided and each agent is guaranteed to safely achieve its task with a finite time-delay in the worst case. Performance of the proposed algorithm is demonstrated via numerical simulations and experiments with quadrotors.  
\end{abstract}


\section{Introduction}
 Collision avoidance is critical for the safe operation of multi-agent systems. Since the complexity of joint planning grows exponentially with the number of agents, significant effort has been devoted to the design of decentralized algorithms that avoid deadlocks and ensure safety. Methods such as potential fields, control barrier functions, or differential games have been used for decentralized planning (e.g., \cite{wang2017safety,luis2019trajectory,mylvaganam17}). However, these methods typically do not accommodate complex spatio-temporal requirements.

 Recently, there has been a significant interest in the analysis and control of dynamical systems under complex spatio-temporal specifications that can be expressed by temporal logics. For instance, linear temporal logic (LTL)~\cite{baier2008} has been extensively used in motion planning and control of robots (e.g.,~\cite{aksaray2015,fainekos2009,wolff2014optimization}). However, LTL cannot express tasks with time constraints such as ``visit $A$ for 1 time unit within 5 time units, and after this visit $B$ for 3 time units within 10 time units, and visiting $C$ must be performed after visiting both $A$ and $B$ twice''. Such tasks with time constraints can be expressed via bounded temporal logics (e.g.,~\cite{koymans1990, Tkachev13, twtl-ref}).

In this paper, we introduce a decentralized algorithm for multi-agent systems to satisfy time-window temporal logic (TWTL) \cite{twtl-ref} specifications while operating in shared environments. In this setting, each agent has an individual TWTL specification. Agents can communicate with the other agents in their local neighborhoods to plan collision-free paths in a receding horizon manner based on the energy functions~\cite{ding2014ltl} computed over their individual product automata. Since it may be sometimes impossible to find safe paths which satisfy the exact specifications, we allow for the temporal relaxation of TWTL specifications and show that, if the environment is sufficiently connected, the proposed approach ensures the completion of all TWTL specifications with finite relaxation.

This work is closely related to \cite{aksaray2015,vasile2014,aksaray2016dynamic}. In \cite{aksaray2015}, a multi-agent receding horizon approach is proposed to generate each agent's path independently using only local information. Their method ensured LTL satisfaction using the idea of energy function defined over a product automaton \cite{ding2014ltl}. However, our work differs from \cite{aksaray2015} by 1) considering tasks with explicit time constraints (TWTL) and allowing its relaxation when necessary, and 2) guaranteeing collision avoidance under mild assumptions on graph connectivity. In \cite{vasile2014}, collision avoidance was ensured by penalizing transitions in the centralized graph which captures the environment and TWTL specifications for all agents. However, this centralized approach is not scalable as the number of agents increases. Moreover, in~\cite{vasile2014}, if a safe path satisfying the TWTL cannot be found, the algorithm terminates and does not allow for relaxations of the TWTLs. Finally, the work in \cite{aksaray2016dynamic} considered a global task that needs to be achieved by a multi-agent system and allowed for temporal relaxation of TWTL specifications. However, collision avoidance was not incorporated in path planning and practically achieved by the assumption of quadrotors flying at different altitudes. 

The paper is organized as follows: Section~\ref{preliminaries} introduces TWTL, its temporal relaxation, and some graph theory preliminaries. Section~\ref{sec:problem} states the problem. The proposed method and theoretical results are presented in Section~\ref{sec:method}. Results from numerical simulations and experiments with a team of quadrotors are presented in Section~\ref{sec:simulations}. A summary of the work and possible future work is presented in Section~\ref{sec:conc}.

\section{Preliminaries} \label{preliminaries}

\subsection{Time Window Temporal Logic (TWTL)}\label{sec:twtl}



\label{sec:spec-twtl}
\noindent{\textit{Syntax and Semantics:}}
The syntax of TWTL is defined as:
\begin{equation*}
\label{eq:logic-def}
\phi : = s \, | \, \phi_i \andltl \phi_j \, | \, \phi_i \orltl \phi_j \, | \, \notltl \phi_i
\, | \, \phi_i \cdot \phi_j \, | \, \hold^d s \, | \, [\phi_i]^{[a, b]},
\end{equation*}
where $\phi$ is a TWTL formula; $s \in \mathcal{S}$ is a site label and $\CA{S}$ is the set of site labels;
$\andltl$, $\orltl$, and $\notltl$ are the Boolean operators for conjunction, disjunction, and negation, respectively;
$\cdot$ is the concatenation operator such that $\phi_i \cdot \phi_j$ specifies that first $\phi_i$ and then immediately $\phi_j$ must be satisfied. The semantics are defined with respect to finite output words $\BF{o}$ over $\mathcal{S}$ where $\BF{o}(k)$ denotes the $k^{th}$ element on $\BF{o}$. The {\em hold} operator $\hold^d s$ specifies that a region $s \in \CA{S}$ should be visited for $d$ time units (i.e., $\BF{o} \models \hold^d s$ if $\BF{o}(t)=s \; \forall t\in [0,d]$), while the {\em within} operator $[\phi]^{[a, b]}$ bounds the satisfaction of $\phi$ within a time window $[a, b]$.  For instance, consider the TWTL formula:
\begin{equation}\small
\label{eq:tr-example-twtl}
\resizebox{6cm}{!}{$\phi = \lnot C^{[0,5]} \cdot \big[\hold^{2} A \andltl [\hold^{2} B]^{[0,6]}\big]^{[1,10]}$},
\end{equation}
which means ``Do not visit C within the time bound [0,5], and immediately after this, service A for 2 time units and service B for two time units within [0,6], both of which must be performed within [1,10]."

\vspace{0.1cm}
\noindent{\textit{Temporal Relaxation:}}
Given a TWTL specification, its temporal relaxation is written by adding slack variables to the time windows of each hold operator to represent shorter or longer time windows. The relaxed version of $\phi$ in \eqref{eq:tr-example-twtl} is: 
{\small
\begin{equation}
\label{eq:tr-example-twtl-relaxed}
\phi(\boldsymbol{\tau}) = \lnot C^{[0,(5+\tau_1)]} \cdot \big[\hold^{2} A \andltl [\hold^{2} B]^{[0,(6+\tau_2)]}\big]^{[1,(10+\tau_3)]}
\end{equation}}%
where $\mbox{\boldmath$\tau$}=(\tau_1, \tau_2, \tau_3) \in \BB{Z}^3$ is the temporal relaxation vector. Overall, the temporal relaxation of $\phi(\mbox{\boldmath$\tau$})$ is quantified by $|\mbox{\boldmath$\tau$}|_{TR} =\max_j (\tau_j)$ \cite{twtl-ref}.

For any TWTL specification, $\phi$, a finite deterministic state automaton (dFSA) can be constructed that captures all temporal relaxations of $\phi$.

\begin{definition}
(Deterministic Finite State Automaton) \cite{twtl-ref}
A dFSA that represents all temporal relaxations of a TWTL formula is a tuple $\FA_{\infty} = (S_{\FA_\infty}, s_0, \Sigma, \delta_{\FA_{\infty}}, F_{\FA_{\infty}})$\footnote{The subscript $\infty$ is used to indicate all temporal relaxations.} where $S_{\FA_{\infty}}$ is a finite set of states; $s_0 \in S_{\FA_{\infty}}$ is the initial state; $\Sigma$ is the input alphabet; $\delta_{\FA_{\infty}} : S_{\FA_{\infty}} \times \Sigma \ra S_{\FA_{\infty}}$ is the transition function; $F_{\FA_{\infty}} \subseteq S_{\FA_{\infty}}$ is the set of accepting states.
\end{definition}

\subsection{Graph Theory} \label{graph theory}
A weighted directed graph is a tuple $\GA=(X,\Delta,w)$ where $X$ is a set of nodes, $\Delta \subset X \times X$ is a set of edges between the nodes, and $w:\Delta\to\mathbb{R}^+$ denotes the weight function, which is a mapping from the edge set to the set of positive real numbers. A node $x_j \in X$ is said to be an out-neighbor to another node $x_i \in X$ if $(x_i,x_j)\in\Delta$, this can also be denoted by $x_j\in \NA_{i}$ where $\NA_{i}$ is the set of out-neighboring nodes to $x_i$. For brevity, we will use the term ``neighbor" to refer to ``out-neighbor" in the rest of the paper.
A path $\mathbf{p}$ on a graph $\GA$ is a sequence of adjacent nodes, i.e, there exists an edge from any node in $\mathbf{p}$ to the next node in the sequence. We use $\vert \mathbf{p}\vert$ to denote the path length (the total number of edges traversed on $\mathbf{p}$). The weighted graph distance between the nodes, $d(x_i,x_j)$, is equal to the cumulative weight of edges traversed along the shortest (minimum cumulative weight) path from $x_i$ to $x_j$.
We use $\NA^h_i$ to denote the set of all nodes that are reachable from $x_i$ in at most $h$-hops (via a path of length at most $h$). A graph is strongly connected if there exists a path from any node $x_i$ to any other node $x_j$. The graph $\GA$ is said to be $k$-connected if it remains connected whenever fewer than $k$ nodes are removed.
\section{Problem Statement} \label{sec:problem}
\subsection{Agent Model}
\noindent\textit{Dynamics:} We assume that an agent moves in a 3D environment, whose abstraction is initially given as a graph $\GA=(X,\Delta,w)$. In general, several methods (e.g., \cite{pappas2003bisimilar,nikou2018timed}) can be used to construct such an abstraction; however, the construction of the abstraction is beyond the scope of this paper. Given an environment graph, we model the dynamics of each agent as a deterministic weighted transition system. Moreover, the agents move synchronously on $\GA$ meaning state transitions for any agent happens at the same time.

\begin{definition}
(Deterministic Weighted Transition System) A deterministic weighted transition system (dWTS) is a tuple $\CA{T}$ = $(X,x_0,\Delta,w,AP,l)$ where:
\begin{itemize}
    \item $X$ \textit{is a finite set of states;} 
    \item $x_0 \in X$ \textit{is the initial state;}
    \item $\Delta \subseteq X \times X$ \textit{is the set of transitions;}
    \item $w : \Delta \to \BB{R}^{+}$ is the weight function;
    \item $AP$ \textit{is a finite set of atomic propositions;}
    \item $l: X \to 2^{AP}$ \textit{is the labeling function.}
\end{itemize}
\end{definition}

We assume that the transition system $\TS$ is (strongly) connected, that is, for each $x \in X$, there exists $x' \in X$ that can be reached in a finite number of transitions. A path (or run) of the system is a sequence of states $\mathbf{x} = x_0x_1\ldots$. This path $\mathbf{x}$ generates an output word $\mathbf{o} = o_0 o_1\ldots$, where $o_t = l(x_t)$, $\forall t\geq0$. The language corresponding to a transition system $\TS$ is the set of all generated output words, denoted by $\CA{L}(\TS)$. The weight of a transition can be defined by the time and/or fuel cost required to traverse the transition. Without loss of generality, we will consider normalized weights, i.e., $w(x,x^\prime) \in (0,1]$ for all $(x,x^\prime) \in \Delta$.
\vspace{0.1cm}

\noindent \textit{Communication:} All agents are assumed to communicate with the agents within their $2H$-hop neighborhood where $H \geq 1$ is the length of planning horizon. When agents communicate, each shares its progress towards its individual task (a notion to measure progress will be defined in Sec.~\ref{sec:method}), the next $H$-hop path, and an indicator for planning update.
\vspace{0.1cm}

\noindent \textit{Specification:} Each agent $i$ aims to satisfy a TWTL $\phi_i$ that is defined over the atomic proposition set $AP$ of the transition system $\CA{T}$. It is assumed that agents do not know about the other agents' specifications. In presence of violations, instead of terminating the mission, it will be allowed to satisfy a temporally relaxed version of $\phi_i$, i.e., $\phi_i(\mbox{\boldmath$\tau$}^i)$.    

\subsection{Problem Statement} \label{sec:pb-definition}
Suppose that there are $n$ identical agents, each of which has the same transition system $\CA{T}$. We address the problem of planning paths for $n$ agents, each of which is required to satisfy an individual TWTL specification while avoiding collisions with each other. In order to enforce collision avoidance, the collision cases are first identified.

\noindent \textit{Occupying the same state:} If agent $i$ occupies the same state as another agent $j$, then there exists a time $t\geq0$ such that $x^i_t = x^j_t$.


\noindent \textit{Traversing the same transition:} Let agents $i$ and $j$ move according to the transitions denoted as $(x^i_t,x^i_{t+1})$ and $(x^j_t,x^j_{t+1})$, respectively. If $(x^i_t, x^i_{t+1}) = (x^j_{t+1}, x^j_t) $, then the agents traverse the same transition (i.e., swapping).

Finally, a safe path can be formally defined as:
\begin{definition}
\label{safepath}
(Safe path) The path of agent $i$, $\mathbf{x_i}=x_0^ix_1^i\dots$, is safe if the following statements are true for all $t\geq0$ and for all $j \neq i$: 1) $x^i_t\not=x^j_t$, and 2) $(x^i_t, x^i_{t+1}) \not= (x^j_{t+1}, x^j_t)$.
\end{definition}

In this paper, we aim to solve a multi-agent path planning problem, which results in agent paths always ensuring collision avoidance and satisfying the TWTL specifications (or minimally relaxed versions of the original TWTLs).

\begin{problem}
\label{problem}
Let a multi-agent system consist of $n$ identical agents each of which has a transition system $\CA{T}$ and individual TWTL specifications $\phi_i$. Find safe agent paths $\mathbf{x_1},\dots,\mathbf{x_n}$ that satisfy minimally relaxed TWTL specifications $\phi_1(\mbox{\boldmath$\tau$}^1),\dots,\phi_n(\mbox{\boldmath$\tau$}^n)$, i.e., 
\begin{equation}
    \begin{split}
        \min_{\mathbf{x}_1,\ldots,\mathbf{x}_n}& 
        \sum_{i=1}^n |\mbox{\boldmath$\tau$}^i|_{TR}\\
        s.t.\; \ & x^i_t \neq x^j_t, \quad \forall j \neq i \\
&  (x^i_t, x^i_{t+1}) \not= (x^j_{t+1}, x^j_t), \hspace{1mm} \;\forall j \neq i \\ 
& \BF{o}_i \models \phi_i(\mbox{\boldmath$\tau$}^i), \quad \forall i \in \{1,...,n\}.
    \end{split}
    \label{eq:problem}
\end{equation}
\noindent where $\BF{o}_i$ is the output word associated with the state path $\bf{x}_i$ $= x^i_0x^i_1\ldots$ over $\CA{T}$, and $|\mbox{\boldmath$\tau$}^i|_{TR} \in \mathbb{Z}$ is the temporal relaxation of $\phi_i$.
\end{problem}

\subsection{Challenges of a Centralized Solution}
The multi-agent problem defined in \eqref{eq:problem} can be solved via an automata-theoretic approach. To this end, a product automaton $\CA{P}$ can be constructed for each agent given its transition system $\CA{T}$ and its automaton $\CA{A_{\infty}}$. The purpose of product automaton is to encode all possible satisfactory cases given the feasible movement on the transition system.  

\begin{definition}
(Product Automaton)
Given a transition system $\CA{T}$ = $(X,x_0,\Delta,w,AP,l)$ and a finite state automaton $\FA_{\infty} = (S_{\FA_\infty}, s_0, \Sigma, \delta, F_{\FA_{\infty}})$ capturing all temporal relaxations of a TWTL specification, a (weighted) product automaton $\PA = \TS \times \FA_\infty$ is a tuple
$\PA = (S_{\PA}, p_0, \Delta_{\PA}, w_p, F_{\PA})$, where
\begin{itemize}
    \item $S_{\PA} = X \times S_{\FA_\infty}$ \textit{is the finite set of states;}
    \item $p_0 := (x_0, s_0) \in S_{\PA}$ \textit{is the initial state;}
    \item $\Delta_{\PA} \subseteq S_{\PA} \times S_{\PA}$ \textit{is the set of transitions;}
    \item $w_p: \Delta_{\PA} \to \BB{R}^{+}$ \textit{is the weight function defined as:} $w_p\big((x,s),(x',s')\big)=w\big((x,x')\big)$;
    \item $F_{\PA} = X \times F_{\FA_\infty}$ \textit{is the set of accepting states.}
\end{itemize}
\end{definition}

Let $p=(x,s) \in S_{\CA{P}}$ and $p'=(x',s') \in S_{\CA{P}}$ be states in product automaton $\PA$. A transition from $p$ to $p'$, i.e., $(p,p') \in \Delta_{\PA}$, implies a transition $(x,x') \in \Delta$ and $\delta(s,l(x'))=s'$. The notions of path and acceptance are the same as in dFSA. A satisfying run of $\TS$ with respect to $\phi$ can be obtained by computing a path from the initial state to an accepting state over $\PA$ and projecting the path onto $\TS$. 

The centralized solution of \eqref{eq:problem} requires to construct an aggregated product automaton $\PA_{full}=\CA{T}_{full}\times\FA_{\infty,full}$ where $\CA{T}_{full}$ is the cartesian product of $\CA{T}$ for $n$ times, and $\FA_{\infty,full}=\FA_{\infty,1}\times\FA_{\infty,2}\times\ldots\times\FA_{\infty,n}$. Such an aggregated product automaton captures all possible agent movements and enable to find safe paths that satisfy the TWTL specifications (or their temporal relaxations). While such an approach can result in optimal agent trajectories, the complexity of constructing $\PA_{full}$ exponentially grows as the number of agents $n$ increases. Hence, we propose a decentralized approximate solution to Problem \ref{problem} by constructing individual product automaton of each agent and solve a planning problem over individual product automata by using only local neighborhood information. 

\section{Receding Horizon Safe Path Planning with TWTL Satisfaction} \label{sec:method}
The proposed decentralized approach is comprised of two parts. In the offline part, a product automaton $\CA{P}_i$ is constructed for each agent $i$ given its transition system $\CA{T}$ and automaton $\FA_{\infty,i}$ (representing all temporal relaxations of TWTL $\phi_i$). Based on $\CA{P}_1,\dots,\CA{P}_n$, each agent computes its nominal plan (from its initial product automaton state to an accepting state in $F_{\PA_i}$ via a Dijkstra-based algorithm) irrespective of other agents' paths. Note that the path computed over $\PA$ is $\mathbf{p_i}=(x_t^i,s_t^i)(x_{t+1}^i,s_{t+1}^i)\dots$, so the corresponding path $\mathbf{x_i}$ on $\mathcal{T}$ can always be extracted from $\mathbf{p_i}$. 

In the online portion, agents move according to the path $\mathbf{p_i}$. Whenever an agent encounters other agents in its local neighborhood, a negotiation protocol, i.e., priority assignment to the agents, is performed. This protocol is used to decide which agents are ``yielded'' to by considering their respective paths for collision avoidance. Such a priority assignment is achieved by the energy function defined over the product automaton states.

\begin{definition}
(Energy Function)
Given a product automaton $\PA$, the energy of a product automaton state $p \in S_{\PA}$ is defined similar to \cite{ding2014ltl} as,
\begin{equation}
\label{eq:energy}   
E(p,\PA) = \begin{cases}
\min\limits_{p'\in F_{\PA}}d\big(p,p'\big) & \text{if } p \not\in F_{\PA} \\
0 & \text{otherwise}
\end{cases}
\end{equation}
\end{definition}
\noindent where $d\big(p,p'\big)$ is the weighted graph distance between $p$ and $p'$. If there is no such reachable accepting state $p'\in F_{\PA}$, the energy of $p$ is $E(p,\PA)=\infty$. Accordingly, the energy function serves as a measure of ``progress'' towards satisfying the given TWTL specification. Any states in $\PA$ with infinite energy, $E(p,\PA)=\infty$, are pruned from $\PA$ to ensure the satisfiability of the specification.

Each agent $i$ is given an initial priority based on the energy of its current state, $E(p^i_t,\PA_i)$. An agent with lower energy is given priority over the higher energy agents in its local neighborhood in order to ensure finite relaxation of all TWTL specifications as discussed in Sec.~\ref{sec:theory-results}. If agents have equivalent energy then the higher priority agent is chosen at random between these agents. Note that the highest priority agent does not consider the other agents and is never required to do replanning; therefore, we can guarantee progress toward satisfying its TWTL specification.

For a local horizon path that consists of $H$-hops, a minimum communication neighborhood of $2H$ is required to detect collisions along the $H$-length path. We define this communication neighborhood of $2H$ as the local neighborhood. For an agent $i$, we consider the set of its $2H$-hop neighbors as $\NA^{2H}_i$. 
For each agent $i$ in the environment, the proposed method requires that all agents in its local neighborhood, $j\in \NA^{2H}_i$, communicate their $H$-horizon path information, $\mathbf{p}_j = p^j_t,p^j_{t+1},\ldots,p^j_{t+H}$ with agent $i$ at any time instant $t$. The method also requires sharing of an indicator that states if the agent's path has been updated with respect to its local neighborhood. We denote this ``update flag'' as $U_{flag}$ which is required for a proper path update protocol discussed below.


\subsection{Algorithm Descriptions} \label{sec:algorithms}
The outline of the proposed method is shown in Fig. \ref{fig:outline} for a single agent. Note that the proposed method runs independently (for each agent) in a distributed manner.


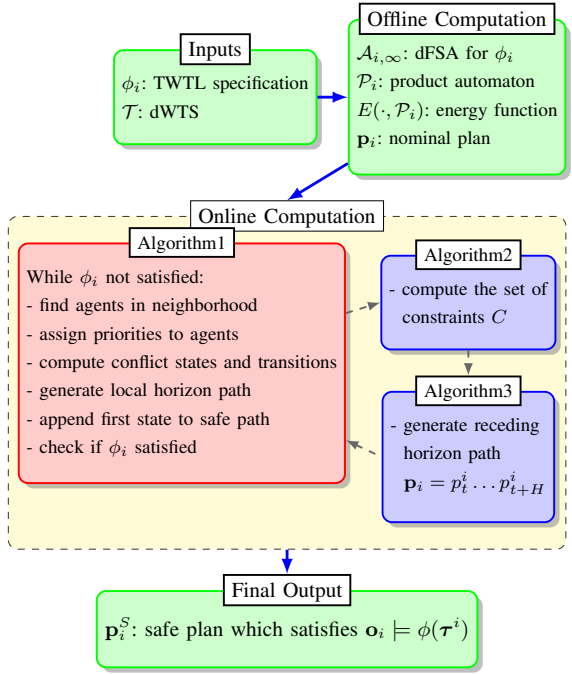
\begin{figure}[htb!]
 \centering
 \resizebox{0.43\textwidth}{!}{
\begin{tikzpicture}[node distance = 0.6cm, thick, nodes = {align = center}, >=latex] \label{fig:outline}
  
  \node[basic box = red, header = \footnotesize Algorithm1] (Alg1) 
    {\footnotesize While $\phi_i$ not satisfied:\\
    \footnotesize - find agents in neighborhood\\
    \footnotesize - assign priorities to agents\\
    \footnotesize - compute conflict states and transitions\\
    \footnotesize - generate local horizon path\\
    \footnotesize - append first state to safe path\\
    \footnotesize - check if $\phi_i$ satisfied};
  \node[shift = (up:1.5*y_node_dist), shift = (left:0.15*x_node_dist), right = of Alg1, basic box = blue, header = \footnotesize Algorithm2] (Alg2) 
    {\footnotesize - compute the set of \\
     \footnotesize \hspace{1mm} constraints $C$};
  \node[below = of Alg2, basic box = blue, header = \footnotesize Algorithm3] (Alg3) 
    {\footnotesize - generate receding \\
    \footnotesize \hspace{1mm} horizon path \\
    \footnotesize \hspace{1mm} $\mathbf{p}_i = p^i_t \ldots p^i_{t+H}$};
  \begin{scope}[on background layer]
    \node[fit = (Alg1)(Alg2)(Alg3), background box, header = \small Online Computation] (backg1) {};
  \end{scope}
  
  \node[east above = of backg1, basic box = green, header = \small Offline Computation] (offline) 
        {\footnotesize $\FA_{i,\infty}$: dFSA for $\phi_i$ \\ 
         \footnotesize $\PA_i$: product automaton \\
         \footnotesize $E(\cdot,\PA_i)$: energy function \\
         \footnotesize $\mathbf{p}_i$: nominal plan};
  \node[shift = (right:0.25*x_node_dist), left = of offline, basic box = green, header = \small Inputs] (inputs) 
        {\footnotesize $\phi_i$: TWTL specification \\
         \footnotesize $\CA{T}$: dWTS};
  
  \node[below = of backg1, basic box = green, header = \small Final Output] (outputs) {\small $\BF{p}_i^S$: safe plan which satisfies $\BF{o}_i \models \phi(\mbox{\boldmath$\tau$}^i)$};
  
  \path[very thick, blue]  (inputs.east) edge[->] (offline.west)
                          (offline) edge[->] (h-backg1.north)
                          (backg1.south) edge[->] (h-outputs.north);
                              
  \path[thick, black!60, dashed] (Alg1) edge[->] (Alg2.west)
                                 (Alg2.south) edge[->] (h-Alg3.north)
                                 (Alg1) edge[<-] (Alg3.west);
\end{tikzpicture}
}
\caption{\small{Outline of the proposed method.}}
\label{fig:outline}
\end{figure}

In the offline part, the product automaton and a nominal plan (without considering any other agents) are computed. The online part includes Algorithms~\ref{alg:Alg1}-\ref{alg:Alg3}. First, Alg.~\ref{alg:Alg1} takes in the initial information computed offline and executes until an accepting product automaton state is reached (i.e., satisfying either the original or the temporally relxaed TWTL formula, $\phi_i$) (line 3). In each iteration of the while loop, the set of all agents within agent $i$'s local neighborhood, $\NA^{2H}_i$, is found in line 4. Next, the agent priorities are determined for all neighboring agents including the agent $i$ itself (line 5). Note that agent priority depends on the energy associated with the agent's current state, $E(p^i_t,\PA_i)$. Then, the set of agents which have a higher priority than agent $i$, i.e., $HP_i$ is generated (line 6). Note that the set $HP_i$ will be used by Alg.~\ref{alg:Alg2} to determine the set of conflicting states and transitions. Now, line 7 in Alg.~\ref{alg:Alg1} ensures that all agents in $HP_i$ have been updated based on their local neighborhoods before agent $i$'s path can be updated. This step is required to ensure that the current finite horizon path information obtained in line 8 from all agents in $\NA^{2H}_i$ is correct (line 9) and the set of conflicting states and transitions, $C$, can be properly computed in (line 9). The, the set $C$ is used in line 10 in order to find a finite horizon safe path starting from the current state (according to Def.~\ref{safepath}). With an updated safe path, the finite-path information and update flag, which is now set to \textit{true}, are shared with the neighboring agents (line 11). Note that line 12 ensures that the updated path is not executed until all agents in its local neighborhood have updated their own paths to account for potential collisions with higher priority agents. As in conventional receding horizon approaches, only the next state $\mathbf{p}_i(1)$ of the generated path is appended to the safe plan $\BF{p}^s_i$ (line 13). Lastly, line 14 checks if $\mathbf{p}_i(1)$ is an accepting product automaton state, and if so the agent has satisfied its TWTL specification, $\phi(\mbox{\boldmath$\tau$}^i)$, and is assigned with the lowest priority (line 15). The agent's path is advanced one step and the update flag is set back to \textit{false} (line 16). 

\begin{algorithm}\small
\label{alg:Alg1}
\KwIn{$\,\CA{P}_i,\;  \mathbf{p}_i$ Product Automaton and Nominal Plan}
\textbf{Note:} $p^i_{t} = \big(x^i_{t},s^i_{t} \big)$ is an element of the finite horizon path $\BF{p}_i = p^i_{t},\dots,p^i_{t+H}$;\\
\textbf{Initialization:} $x^i_{0} = p^i_{x,0}; \mathbf{p}_i^s = p_0^i$; $U_{flag} = false$; 
\vskip+0.5ex
\While{$p^i_t \notin F_{\PA}$}{
        Find the set of neighboring agents, $\NA^{2H}_i$\; 
        Compute agent priorities (including self)\;
        Generate $HP_i$, the set of neighboring agents with higher priority\;
        Wait until all agents' $U_{flag}$ in $HP_i$ set to \textit{true}\; 
        Obtain $H$-hop path $\mathbf{p}_j$ from agents $\forall j \in \NA^{2H}_i$\; \vskip+0.5ex
        $C \gets$ \textbf{algorithm2} \big($HP_i,\PA_i,\mathbf{p}_i,\mathbf{p}_j \vert \forall j \in HP_i$\big)\; \vskip+0.5ex
        $\mathbf{p}_i, U_{flag}$ $ \gets $ \textbf{algorithm3} ($\CA{P}_i$, $p^i_{t}$, $C$)\; \vskip+0.5ex
        Share path $\mathbf{p}_i$ and $U_{flag}$ with neighboring agents, $\NA^{2H}_i$\; \vskip+0.5ex
        Wait until all agents' $U_{flag}$ in $\NA^{2H}_i$ set to \textit{true}\; \vskip+0.5ex
        $\mathbf{p}_i^s= \mathbf{p}_i^s, \mathbf{p}_i(1)$ \\ \vskip+0.5ex
       \If{$\mathbf{p}_i(1) \in F_{\PA}$}{ 
            $HP_i=\NA^{2H}_i$}
        $p^i_t=\mathbf{p}_i(0)$ and $U_{flag} = false$ \;
        }
\caption{Online Safe Path Planning for Agent $i$}
\end{algorithm}


In Alg.~\ref{alg:Alg2}, we account for two types of conflicts in order to guarantee a safe update in Alg.~\ref{alg:Alg3}. The first type of conflict, $C_X$, enforces that agent $i$ cannot occupy the same state with an agent of higher priority at the next time step as required by the first constraint in \eqref{eq:problem} (lines 4-5). The second type of conflict, $C_{\Delta}$, enforces that no two agents' traverse the same transition (i.e., swapping) as the second constraint in \eqref{eq:problem} (lines 6-7). If any conflicts, $C_X$ or $C_{\Delta}$, are found then they are added to the set of conflicting states and transitions, $C$, with the hop $h$ associated with them (line 8). Finally, lines 9-10 in Alg.~\ref{alg:Alg2} ensures that the highest priority agent cannot transition to higher energy states in the first hop. This is required in order to guarantee progress toward TWTL satisfaction discussed in Sec.~\ref{sec:theory-results}-Thm.~\ref{thm:finite_relax}.
\begin{algorithm}\small
\label{alg:Alg2}
\KwIn{$HP_i, \PA_i$ - Set of higher priority agents than agent $i$ in local neighborhood, product automaton}
\KwIn{$\mathbf{p}_i,\mathbf{p}_j \vert \forall j \in HP_i$ - finite horizon paths considered for conflicts (Recall $\BF{p}_i = p^i_{t},\dots,p^i_{t+H}$)}
\KwOut{$C$ - Set of conflict states and transitions}
\vskip+0.5ex
\textbf{Initialization:} $C_X=\emptyset, C_\Delta = \emptyset$, $C=(C_X, C_\Delta)$, $x^i_0 = x^i_t$, $x^j_0 = x^j_t$ \vskip+0.25ex
\For{$h = 1:H$}{
    \For{each agent j in $HP_i$}{
        \If{$x^i_h = x^j_h$}{ \vskip+0.25ex
            $C_{X,h} = x^j_h$ }
        \If{$(x^i_t, x^i_{t+1}) \not= (x^j_{t+1}, x^j_t)$}{ \vskip+0.25ex
            $C_{\Delta,h} = (x^i_{h-1}, x^i_h)$}
        }
        $C = C \cup (C_{X,h},C_{\Delta,h}$)
    }
\If{$HP_i == \emptyset$ and $E(p^i_t,\PA_i)>0$}{
    $C_{\Delta,1} = \{(x^i_t,x^\prime) \text{ from } (p^i_t,p^\prime) \in \Delta_{\PA_i} | E_i(p^\prime,\PA_i) \geq E(p^i_t,\PA_i)\}$
    }
\Return{$C = (C_X,C_{\Delta})$}
\caption{Find Conflicts for Agent $i$}
\end{algorithm}

The objective of Alg.~\ref{alg:Alg3} is to generate a local $H$-hop path over the agent's product automaton, $\CA{P}_i$. Alg.~\ref{alg:Alg3} essentially performs a depth-first-like search from the current state $p^i_t$ on $\CA{P}_i$ to find a conflict-free $H$-length path that has the minimum the sum of energies. 
To this end, the set of $H$-length safe paths originating from $p^i_t$ is found in lines 1 and 2 by making sure that each $p^i_{t+h} = (x^i_h,s^i_h)$ over an $H$-length safe path satisfies $x^i_h \notin C_{X,h}$ \text{and} $(x^i_{h-1},x^i_h) \notin C_{\Delta,h}$ for all $h=\{1,\dots,H\}$. Then, the cost of each path is computed as the sum of energies of each state in the path (line 3). Finally, the path with lowest cumulative energy is chosen (line 4) and the update flag is set to \textit{true} (line 5). This path and update flag are then returned to line 10 of Alg.~\ref{alg:Alg1}. 
\begin{algorithm}\small
\label{alg:Alg3}
\KwIn{$\CA{P}_i, p^i_t$ - Product automaton and current state}
\KwIn{$C$ - Set of conflict states and transitions}
\KwOut{$\mathbf{p}_i, U_{flag}$ - Conflict-free path; Update flag} \vskip+0.4ex
Find the set of all $H$-length paths starting from $p_t^i$, $\mathbf{P}^H$; \\
Find safe paths $\mathbf{P}_s \subseteq \mathbf{P}^H$ by removing the paths that contain conflicts based on $C=(C_X,C_{\Delta})$; \\
Calculate the cumulative energy $E_{path}^k$ of each $\mathbf{p}^k \in \mathbf{P}_s$; \\
$\mathbf{p}_i = arg\min\limits_{\mathbf{p}^k \in \mathbf{P}_s} (E^k_{path})$; \\ 
$U_{flag} = true$ \\ \vskip+0.25ex
\Return{$\mathbf{p}_i, U_{flag}$}
\caption{Receding Horizon Plan for Agent $i$}
\end{algorithm}

\subsection{Theoretical Results} \label{sec:theory-results}
Now, we will show that the proposed method guarantees the generation of a safe path which satisfies the corresponding agent's TWTL formula or a finite relaxation of it, $\phi(\mbox{\boldmath$\tau$})$. 
\begin{theorem}
(Safety) If the environment graph $\GA=(X,\Delta,w)$ is $n$-connected, then Alg.~\ref{alg:Alg1} always produces a safe path $\mathbf{p}_i$ which satisfies the properties in Def. \ref{safepath}.
\end{theorem}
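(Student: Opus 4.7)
The plan is to argue by strong induction on the priority ordering that Alg.~\ref{alg:Alg1} assigns in each round. The $U_{flag}$ synchronizations at lines~7 and~12 of Alg.~\ref{alg:Alg1} ensure that within each outer-loop iteration the agents effectively commit their $H$-horizon plans one at a time, in strictly decreasing priority order. I can therefore reason about each agent in turn, assuming that every strictly higher-priority agent has already produced a safe committed path.

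For the base case, the top-priority agent has $HP_i = \emptyset$, so Alg.~\ref{alg:Alg2} returns only the self-imposed first-hop constraint of line~10 (no transition to strictly higher energy). Since $p_t^i$ has finite energy and lies on a shortest path to $F_{\PA_i}$, at least one outgoing transition respects this constraint, and all subsequent hops are unconstrained; Alg.~\ref{alg:Alg3} thus returns a non-empty plan. For the inductive step, I need to show that, given the higher-priority commitments, Alg.~\ref{alg:Alg3} still finds a non-empty safe $H$-horizon path.

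The central counting claim is that at each hop $h\in\{1,\dots,H\}$, each $j \in HP_i$ blocks at most one outgoing neighbor of $x_{t+h-1}^i$ in $\TS$. Case 1: $x_{t+h}^j \neq x_{t+h-1}^i$. Then only the occupation conflict applies, forbidding at most the outgoing neighbor $x_{t+h}^j$. Case 2: $x_{t+h}^j = x_{t+h-1}^i$. Then the occupation target equals agent $i$'s own vertex (not an outgoing neighbor), and the swap conflict forbids exactly one outgoing direction, namely $x_{t+h-1}^j$. In either case, $j$ blocks at most one outgoing neighbor, so at most $|HP_i|\le n-1$ successors are forbidden per hop. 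Because $\GA$ is $n$-connected, every vertex has out-degree at least $n$, and determinism of $\FA_{\infty,i}$ gives a bijection between outgoing $\TS$-transitions from $x$ and outgoing $\PA_i$-transitions from $(x,s)$, so each product state has out-degree at least $n$. Removing $n-1$ candidates still leaves at least one safe successor, so a safe $H$-hop path can be built greedily; Alg.~\ref{alg:Alg3} returns it, and only the first step $\mathbf{p}_i(1)$ is committed, preserving safety.

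The main obstacle is formalizing the per-hop counting claim, which hinges on the observation that when a higher-priority agent swaps into $x_{t+h-1}^i$ its occupation target coincides with agent $i$'s own position rather than a distinct outgoing neighbor, so the two conflict families never jointly block two distinct neighbors for the same $j$. A secondary subtlety is that pruning infinite-energy states from $\PA_i$ could in principle reduce its effective out-degree; since the theorem concerns only collision avoidance, the degree bound should be applied at the $\TS$ level and transferred back via the determinism-induced bijection to the reachable portion of $\PA_i$ containing $p_t^i$.
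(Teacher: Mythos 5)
Your proposal follows the same core counting argument as the paper's proof---$n$-connectivity gives at least $n$ out-neighbors, at most $n-1$ of them are blocked by higher-priority agents, so at least one safe successor survives at every hop---but your version is noticeably more complete in two respects. First, the paper's proof only counts the occupation conflicts ($\vert C_X\vert \le n-1$) and is silent on whether the swap conflicts $C_\Delta$ could eat into the same degree budget; your case analysis (a given $j\in HP_i$ either blocks one distinct out-neighbor by occupation, or, when it moves onto $x^i_{t+h-1}$, blocks only the self-loop by occupation plus exactly one out-neighbor by swap) closes that gap and is the right way to see that each higher-priority agent removes at most one of the $n$ distinct neighbors guaranteed by connectivity. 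Second, your explicit induction on the priority order, justified by the $U_{flag}$ synchronization, and your use of determinism of $\FA_{\infty,i}$ to transfer the degree bound from $\TS$ to $\PA_i$, are both left implicit in the paper. One caveat: the ``secondary subtlety'' you flag about infinite-energy states being pruned from $\PA_i$ is real and is not actually resolved by your remark about working at the $\TS$ level---the surviving unblocked neighbor in $\TS$ could in principle correspond to a pruned product state, in which case Alg.~\ref{alg:Alg3} as written would return no path. The paper's proof ignores this entirely, so you are not worse off, but if you want a fully airtight argument you would need an additional assumption (or a fallback that permits safety-only moves outside the pruned automaton).
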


\begin{proof}
Path planning for each agent $i$ in Alg.~\ref{alg:Alg1} is implemented over the product automaton, $\CA{P}_i$, which is pruned of any unsafe states and transitions, $C = (C_X,C_{\Delta})$ (found in Alg. \ref{alg:Alg2} according to Def. \ref{safepath}). The $n$-connectivity of the environment graph $\GA$ implies that any node in $\GA$ has at least $n$ neighbors. This means that an agent can move to at least $n$ different adjacent states in its transition system $\CA{T}$ at one time step. Considering a system with $n$ agents, at any time step $t$, the number of conflicting states for agent $i$ is at most $n-1$, i.e., $\vert C_X \vert \leq n-1$. Therefore, $n$-connectivity guarantees that, after the pruning of the conflicted states by Alg. \ref{alg:Alg2}, there always exists at least one unoccupied state in the neighborhood (which is mapped to the product automaton state, i.e., the pair of the unoccupied state in $\CA{T}$ and the corresponding automaton state). Thus at least one safe transition exists over $\CA{P}$ to take. Finally, since Alg. \ref{alg:Alg3} generates a path $\BF{p}_i$ over $\CA{P}'_i\subset\CA{P}_i$ that is pruned of the states and transitions  in $C$, $\BF{p}_i$ is always a safe path.
\end{proof}




In order to guarantee that the energy of each agent decreases to zero in finite time, we show that each agent in the system will eventually be the globally highest priority agent meaning that all other agents in the environment ``yield'' to it, and the only goal of the highest priority one becomes to decrease its energy (which is computed offline in a topology where $C_X=\emptyset$ and $C_\Delta = \emptyset$). This enables the agent to decrease its energy during the receding horizon plan (for each successive update) and will reach an accepting state $p \in F_{\PA}$ in finite time which by definition has $E(p,\PA) = 0$.

\begin{theorem} \label{thm:finite_relax}
(Satisfaction of TWTL with Finite Relaxation) 
If agent $i$ has an initial product automaton state $p_0^i$ such that $E(p^i_0,\PA_i)<\infty$, then Alg.~\ref{alg:Alg1} produces a finite path which satisfies the given TWTL specification or a relaxed version of the specification, $\phi(\mbox{\boldmath$\tau$}^i)$. 
\end{theorem}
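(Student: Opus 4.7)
The plan is to prove the theorem by an induction over agents ordered by priority. The key observation, stated informally just before the theorem, is that whenever agent $i$ is the \emph{globally} highest-priority agent (i.e.\ $HP_i=\emptyset$ and no other active agent has lower or equal energy), then lines~9--10 of Alg.~\ref{alg:Alg2} rule out every one-step transition to a state whose energy is at least $E(p^i_t,\PA_i)$. Since $\PA_i$ has been pruned of states with infinite energy and $E(p^i_0,\PA_i)<\infty$, for every non-accepting state $p^i_t$ there exists at least one neighbor $p'$ with $E(p',\PA_i)=E(p^i_t,\PA_i)-w_p(p^i_t,p')<E(p^i_t,\PA_i)$; by the Safety theorem such a neighbor survives the conflict pruning, so Alg.~\ref{alg:Alg3} returns a one-hop path that is both safe and strictly energy-decreasing.

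Next I would formalize monotone progress for the globally highest-priority agent. Because the product automaton is finite, the image of $E(\cdot,\PA_i)$ is a finite subset of $\mathbb{R}_{\ge 0}$, and any strictly decreasing sequence of energy values must reach $0$ in a bounded number of steps. Combined with the previous paragraph, this shows that the globally highest-priority agent reaches an accepting state $p\in F_{\PA_i}$ in finite time, at which point its TWTL specification (or a temporal relaxation $\phi_i(\mbox{\boldmath$\tau$}^i)$, since $\FA_{\infty,i}$ was built to capture all relaxations) is satisfied.

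The inductive step is then immediate from line~15 of Alg.~\ref{alg:Alg1}: once an agent reaches $F_{\PA}$ it declares every neighbor to have higher priority, effectively removing itself from future priority comparisons. Hence the agent with the next-lowest energy becomes the globally highest-priority active agent, and the previous argument applies to it in turn. Iterating, every agent reaches an accepting state after finitely many total time steps, and the corresponding output word satisfies some $\phi_i(\mbox{\boldmath$\tau$}^i)$ with $|\mbox{\boldmath$\tau$}^i|_{TR}<\infty$.

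The main obstacle I anticipate is showing that the ``globally highest priority'' status is actually well-defined and eventually attained by some agent at each stage, rather than being blocked indefinitely by ties or by yielding conflicts with lower-priority agents. The tie case is handled by the random tie-break rule in Sec.~\ref{sec:method}, which always produces a unique top-priority agent in a finite neighborhood. The yielding issue requires checking that no lower-priority agent can ever \emph{force} the top-priority one to deviate: indeed, Alg.~\ref{alg:Alg2} only adds conflicts induced by agents in $HP_i$, which is empty for the top-priority agent, so its plan is unconstrained by others and lines~9--10 are the only active restriction. Once this is in place, the finite-energy hypothesis $E(p^i_0,\PA_i)<\infty$ together with positivity of edge weights in $\PA_i$ delivers the finite-time satisfaction of $\phi_i(\mbox{\boldmath$\tau$}^i)$.
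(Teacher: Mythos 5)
Your proposal is correct and follows essentially the same route as the paper: the core lemma in both is that the globally highest-priority agent has $HP_i=\emptyset$, so lines~9--10 of Alg.~\ref{alg:Alg2} are its only constraint and they force a strictly energy-decreasing step (which always exists since $E$ is a finite shortest-path distance), after which finiteness of the energy values yields termination. The paper packages the agent-by-agent induction as "the minimum non-zero energy in the system strictly decreases," but this is the same argument with different bookkeeping.
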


\begin{proof}
If $E(p^i_0,\PA_i)=\infty$, then an accepting state is not reachable from $p^i_0$. Suppose that agent $i$ has the global highest priority at time $t$, i.e., $E(p^i_t,\PA_i) \leq E(p^i_t,\PA_j)$, $\forall j \neq i$. Then Alg. \ref{alg:Alg2} (line 10) always updates $C_{\Delta}$ with the transitions that lead agent $i$ to higher energy states in the next time step, i.e., $C_{\Delta,1} = \{(x^i_t,x^i_{t+1}) \text{ from } \big((x^i_t,s^i_t),(x^i_{t+1},s^i_{t+1})\big) \in \Delta_{\PA_i} | E(p^i_{t+1},\PA_i) \geq E(p^i_{t},\PA_i)\}$. Accordingly, line 5 in Alg.~\ref{alg:Alg3} guarantees to prune the transitions in $C_{\Delta,1}$, thus $E(p^i_{t+1},\PA_i) < E(p^i_{t},\PA_i)$ is always true.

Let $t$ be the current time. If all agents have zero energy at $t$, then all the TWTL specifications are satisfied originally or with some temporal relaxation by time $t$. If there exists at least one agent that has a non-zero energy, then we will show that non-zero energies will eventually go to zero. Suppose that agent $i$ has the minimum non-zero energy at time $t$, i.e.,  $E(p^{min}_t,\PA_{min}) = E(p^i_{t},\PA_i)$, meaning that it is the global highest priority agent. Moreover, $E(p^{min}_{t+1},\PA_{min}) \leq E(p^i_{t+1},\PA_i)$ by definition. Overall, the previous three energy relations imply $E(p^{min}_{t+1},\PA_{min}) < E(p^{min}_t,\PA_{min})$, i.e., the minimum non-zero energy strictly decreases even when the highest priority agent has changed. 
\end{proof}

\subsection{Computational Complexity} \label{sec:complexity}
We discuss the computational complexity for a single agent as we propose a decentralized approach. We show that the bulk of the computation lies in constructing the product automaton and computing the energy of its states, which are both performed offline. 
The complexity of generating the weighted product automaton $\CA{P}$ is highly dependent on the size of the transition system $\CA{T}$ and the size of the TWTL formula used to construct $\FA_{\infty}$, i.e., $O\big(\vert X \vert \times \vert \phi \vert \times 2^{\vert \phi \vert + \vert AP \vert}\big)$ where $\vert X \vert$ is the number of states in $\CA{T}$, and $\vert \phi \vert \times 2^{\vert \phi \vert + \vert AP \vert}$ is the maximum number of states in $\FA_{\infty}$ created from TWTL $\phi$. Note that the size of $\FA_{\infty}$ is independent of the deadlines of the {\em within} operators in $\phi$ \cite{twtl-ref} thus the quantity of relaxation does not influence the automata sizes. 
In light of Section 4.3 of \cite{ding2014ltl}, the complexity of computing the energy function is $O(\vert F_{\PA} \vert ^3 + \vert F_{\PA} \vert ^2 + \vert S_{\PA} \vert ^2 \times \vert F_{\PA} \vert)$ where $F_{\PA}$ is the set of accepting states and $S_{\PA}$ is the set of states in the product automaton $\PA$. Note that this is only computed once offline. 
The complexity of generating local horizon plans in Alg.~\ref{alg:Alg3} is highly dependent on the horizon length $H$. If we consider the maximum number of transitions at each state of $\PA$ as $\delta_{\PA}$, then the number of operations for the receding horizon update scales as $O\big(\delta_{\PA}^H\big)$ which is similar to a depth-first search.

\section{Results} \label{sec:simulations}
The code for our simulations and experiments is derived from the PyTWTL package (\url{hyness.bu.edu/twtl}) which handles the construction of $\CA{A}_\infty$ corresponding to a given TWTL $\phi$, and the creation of the product automaton $\CA{P}$ given a transition system $\CA{T}$. The algorithms presented in this paper can be found at \url{https://github.com/pete9936/pyTWTL\_ObsAvoid}. All simulations were carried out on a desktop computer with 4 cores running Ubuntu 16.04, 4.0GHz CPU, and 32GB of RAM.


In simulations, five agents moving in an environment shown in Fig. \ref{fig:FigEnv} are considered, where each agent is given a different TWTL specification defined over the environment.
\begin{align} \label{sim_formulas}
\phi_1 &= [\hold^{2} B]^{[0,(6+\tau^1_1)]} \cdot [\hold^{1} A]^{[0,(5+\tau^1_2)]}, \nonumber \\
\phi_2 &= [\hold^{2} B]^{[0,(5+\tau^2_1)]} \cdot [\hold^{1} C]^{[0,(4+\tau^2_2)]}, \nonumber \\
\phi_3 &= [\hold^{1} D]^{[0,(4+\tau^3_1)]} \cdot [\hold^{2} F]^{[0,(4+\tau^3_2)]}, \\
\phi_4 &= [\hold^{1} E]^{[0,(5+\tau^4_1)]} \cdot [\hold^{1} Base4]^{[0,(3+\tau^4_2)]}, \nonumber \\
\phi_5 &= [\hold^{1} G]^{[0,(6+\tau^5_1)]} \cdot [\hold^{1} Base5]^{[0,(6+\tau^5_2)]}, \nonumber
\end{align}
where $\tau_1^i$ and $\tau_2^i$ refer to the temporal relaxations of the corresponding tasks of agent $i$. All five TWTL specifications are examples of servicing some regions in sequence. These were kept relatively simple and of similar form to illustrate how the number of agents and horizon length impact performance and computation time (shown in Tables~\ref{tab:table2} and \ref{tab:table3}). In practice, these task specifications can be made more complex due to the richness of the TWTL language.
\begin{figure}[htb]
\centering
\includegraphics[trim =2mm 2mm 2mm 2mm, clip,scale=0.25]{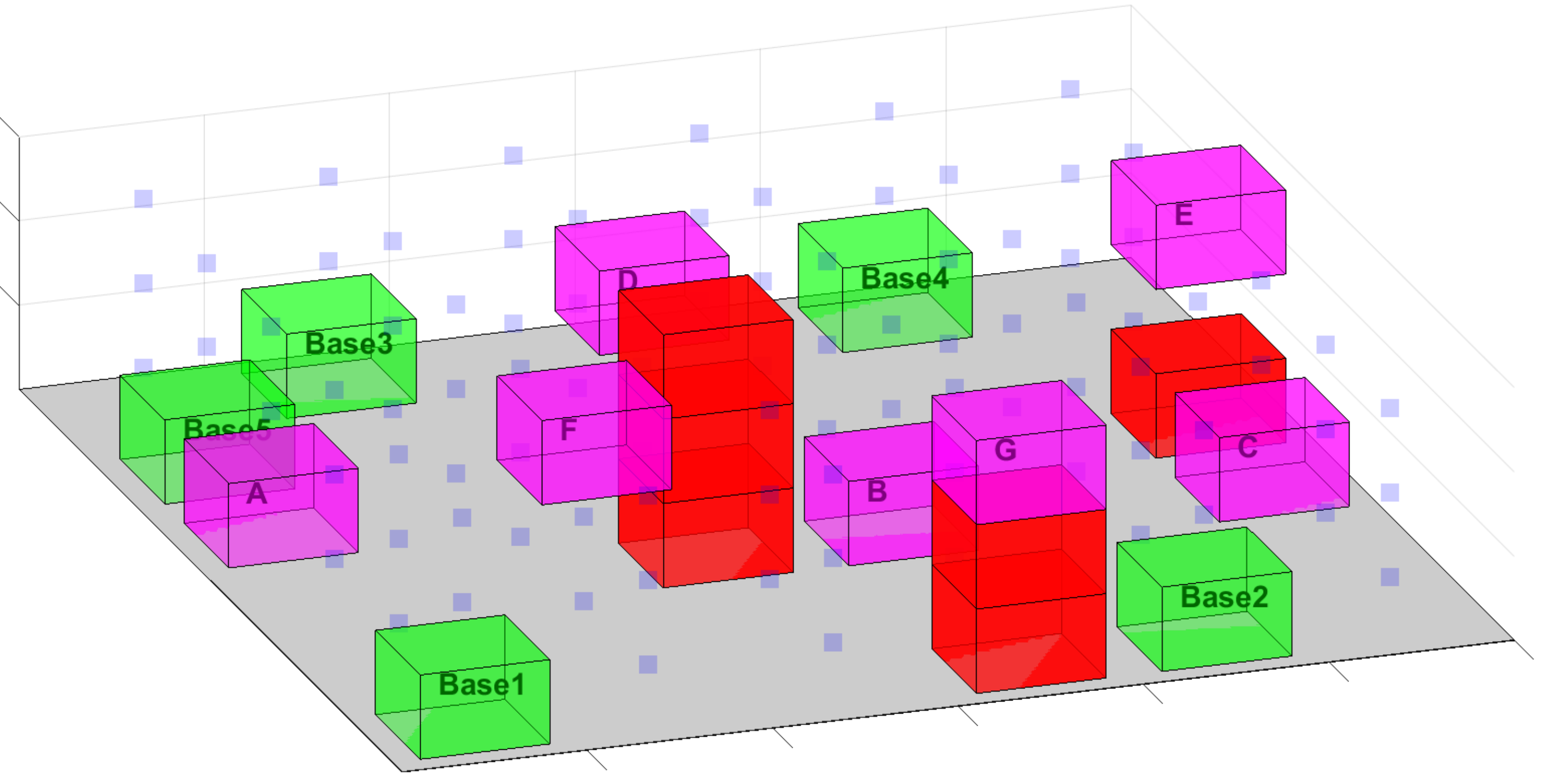}
\caption{\small The 6 x 6 x 3 discretized environment shared by 5 agents. Initial positions are given by the green nodes. Obstacles are shown in red, and seven regions of interest (A, B, C, D, E, F, G) are shown in magenta. Agents can stay put or move to any neighboring cell (non-obstacle) in one time step.}
\label{fig:FigEnv}
\end{figure}

For each agent, the temporal relaxations of the nominal and collision-free paths are highlighted in Tab.~\ref{tab:table1}. For the given scenario, Tab.~\ref{tab:table1} shows that only Agent 1 needs positive temporal relaxation. This is due to Agent 1 waiting Agent 2 to satisfy the ``service B for 2 time units'' part of its formula since Agent 2 has higher priority (lower energy) than Agent 1 as illustrated in Fig. \ref{fig:FigEnergy}. Notice that while Agent 1 is waiting between $t\in [2,4]$, its energy is not decreasing (no progress towards task satisfaction) in Fig.~\ref{fig:FigEnergy}. Also, negative relaxation in Tab.~\ref{tab:table1} implies that the formulae are satisfied within a shorter time window than the time window originally allotted.

\begin{table}[h!]
\begin{center}
    \caption{Temporal Relaxation ($\tau$) of paths.}
    \label{tab:table1}
    \begin{tabular}{ |p{2mm}|p{3mm}|p{3mm}|p{3mm}|p{3mm}|p{3mm}|p{3mm}|p{3mm}|p{3mm}|p{3mm}|p{3mm}|}
     \hline \\[-1em]
      & \multicolumn{5}{|c|}{Nominal paths} & \multicolumn{5}{|c|}{Safe paths} \\ \hline \\[-1em]
     & $\BF{x_1}$ & $\BF{x_2}$ & $\BF{x_3}$ & $\BF{x_4}$ & $\BF{x_5}$ & $\BF{x_1}^S$ & $\BF{x_2}^S$ & $\BF{x_3}^S$ & $\BF{x_4}^S$ & $\BF{x_5}^S$\\ \hline \\[-1em]
     $\BF{\tau_{1}}$  & \cellcolor{lightgray} $\BF{-1}$ & -1 & -1 & -1 & -1 & \cellcolor{lightgray} $\BF{+1}$ & -1 & -1 & -1 & -1 \\ \hline \\[-1em]
     $\BF{\tau_{2}}$  & -1 &  -1 & 0 & 0 & -1 & -1 & -1 & 0 & 0 & -1 \\ \hline
    \end{tabular}
\end{center}
\vspace{-3mm}
\end{table}


For the $6\times6\times3$ environment shown in Fig. \ref{fig:FigEnv}, each respective agent has a transition system $\CA{T}$ of $(102; 1594)$ states and transitions and a product automaton $\CA{P}_i$ of about (200; 3200) states and transitions. The offline part of our approach (generating $\CA{T}$, $\CA{P}_{1 \ldots 5}$, $E_{i \ldots }$, and the nominal paths $\BF{p}_1,\dots,\BF{p}_5$) takes 3.90 seconds. 
Generating the collision-free paths (the online portion of our algorithm) with a horizon length $H=3$ takes 1.17 seconds. However, the metric of greater concern is the average time for an individual agent's receding horizon update which is approximately $0.0275$ seconds, fast enough for real-time execution.

\begin{figure}[htb]
\centering
    \includegraphics[trim =6mm 0mm 0mm 10mm, clip,scale=0.4]{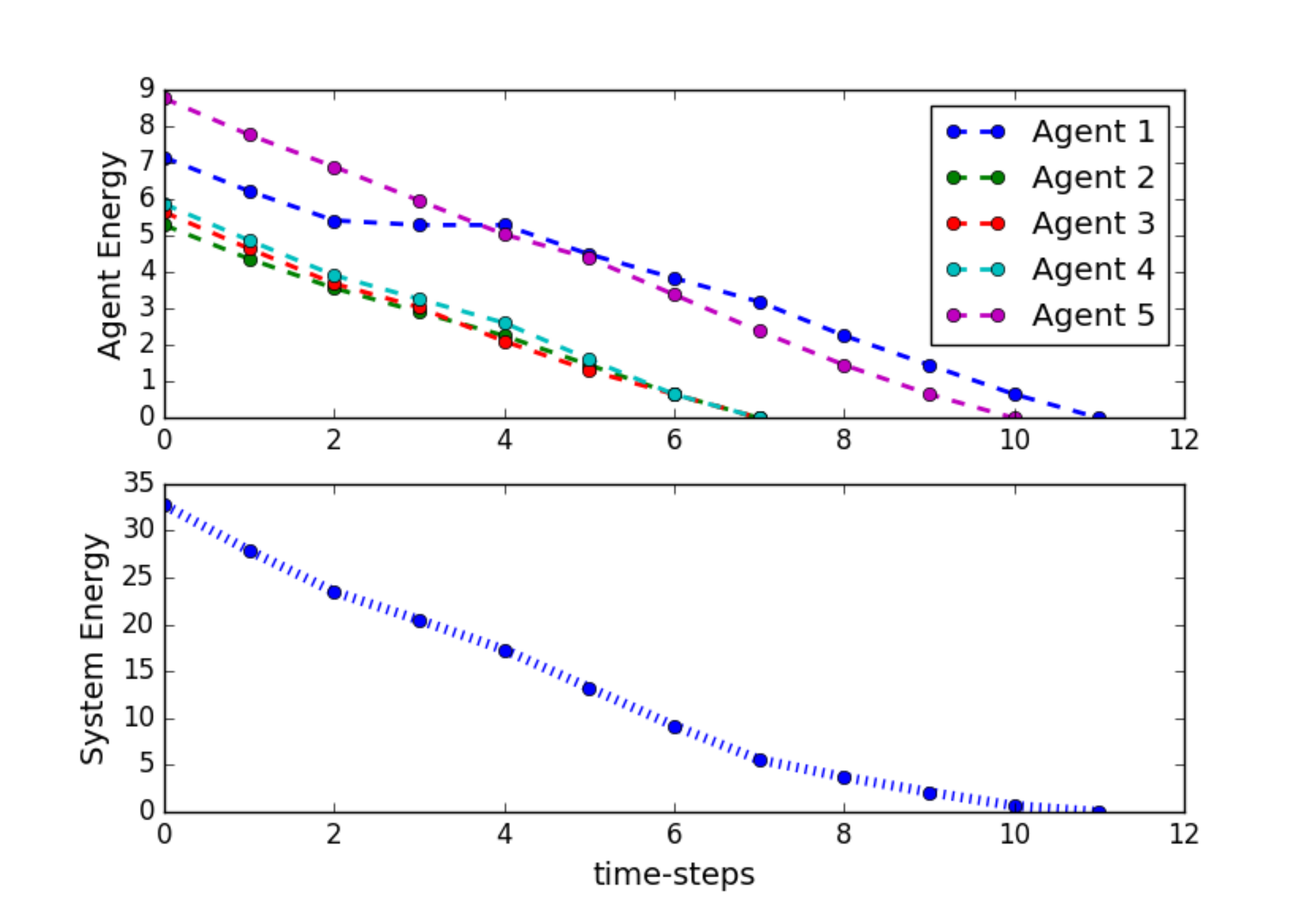}
    \caption{\small The agent energy values at each state along the paths are shown as well as the collective energy of the system. These energy values correspond to each state along the path which leads to satisfaction of each agents' TWTL formula shown in \eqref{sim_formulas}. }
\label{fig:FigEnergy}
\end{figure}

We define the iteration time as the time consumed by a single agent to update its receding horizon path. Table~\ref{tab:table2} shows the total online run-time (i.e., the sum of all iteration times of all agents) and the average iteration time for varying lengths of horizon. As seen from Tab.~\ref{tab:table2}, the average iteration time (and the total run time) significantly grows when the horizon, $H$, increases (as discussed in Sec.~\ref{sec:complexity}). Note that the safe paths generated based on the TWTL formulae \eqref{sim_formulas} trying to be satisfied in Fig.~\ref{fig:FigEnv} were identical for all values of $H$, meaning that extended path information from neighboring agents did not affect the outcome of this scenario. However, in more constrained environments such as a narrow corridor with many agents, increasing the horizon will potentially influence the outcome as longer paths will include more information from the neighbors. 


\begin{table}[h!]
\begin{center}
    \caption{How Receding Horizon Length Impacts Execution Time}
    \label{tab:table2}
    \begin{tabular}{ |c|c|c|c|}
     \hline \\[-1em]
     & $H=2$ & $H=3$ & $H=4$\\ \hline \\[-1em]
     Total Online Run-time (s) & 0.15 & 1.17 & 68.1 \\ \hline \\[-1em]
     Avg. Iteration time (s) & 0.0037 & 0.0275 & 1.621 \\ \hline
    \end{tabular}
\end{center}
\vspace{-3mm}
\end{table}

\begin{table}[h!]
\begin{center}
    \caption{How Number of Agents Impacts Execution Time (\textit{H=3})}
    \label{tab:table3}
    \begin{tabular}{ |c|c|c|c|}
     \hline \\[-1em]
     & $n=3$ & $n=4$ & $n=5$\\ \hline \\[-1em]
     Total Online Run-time (s) & 0.63 & 0.83 & 1.17 \\ \hline \\[-1em]
     Avg. Iteration time (s) & 0.0253 & 0.0259 & 0.0275 \\ \hline
    \end{tabular}
\end{center}
\vspace{-3mm}
\end{table}

Selecting the horizon $H=3$, Table~\ref{tab:table3} shows that the total online run-time increases linearly with the number of agents. This is due to the fact that all $n$ agents must update their respective paths at each step before moving to the next iteration since we are running the simulations on a centralized machine. However, in general, our method can be run in a distributed manner where this computation would only increase based on the number of agents in the local neighborhood at each iteration, $\vert \NA^{2H}_i \vert$. The average time for an individual agent's path update is hardly impacted by the number of agents in the scenario, which implies that the numbers of agents in the local neighborhood are almost the same across different cases of total number of agents. 


The proposed method is also verified experimentally on a team of five Crazyflies 2.0. All experiments are conducted in a testing environment of $3m\times3m\times1.5m$ motion-capture space identical to the simulation environment shown in Fig. \ref{fig:FigEnv}, using a \texttt{VICON} camera system with 8 cameras. 
We use the Crazyswarm 
package \cite{preiss2017crazyswarm} to perform the low-level control algorithms, communication, and interface of the \texttt{VICON} system with the Crazyflies. 
Note that in the experiments a ``downwash zone'' below each quadrotor was considered, since entering these regions would lead to loss of flight. This restriction was not formally considered in the presented method since this is both an environment and agent-dependent constraint. An instance from the experiment is shown in Fig.~\ref{fig:FigExperiment} and a video of the experiment can be found at \url{https://youtu.be/A7iadOqIVNk}.
\begin{figure}[htb]
\centering
\includegraphics[trim =0mm 0mm 0mm 0mm, clip,scale=0.20]{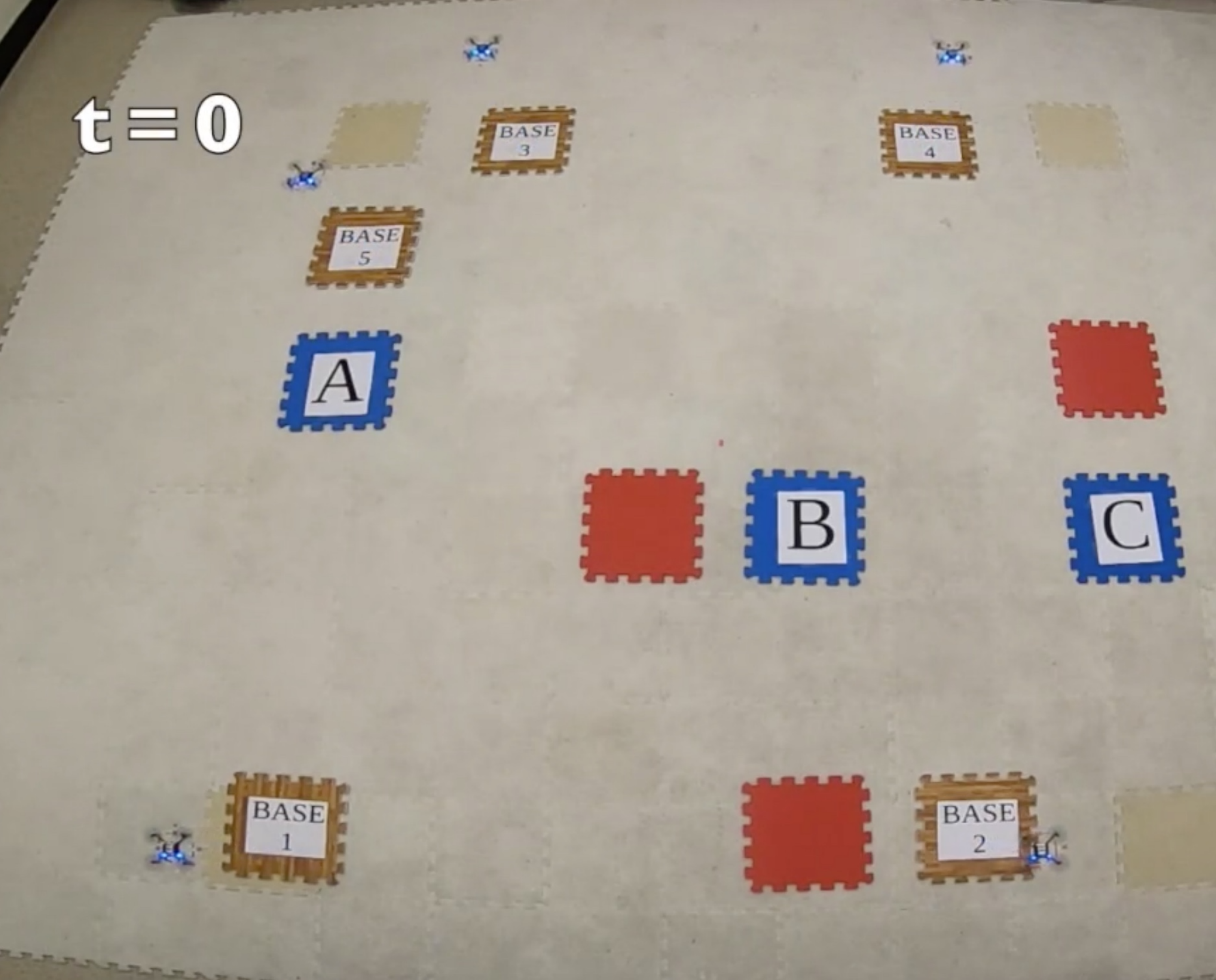}
\caption{\small An instance from the experiment which includes 5 Crazyflies and bases (brown), obstacles (red), some regions of interest (blue).}
\label{fig:FigExperiment}
\end{figure}



\section{Conclusions and Future Work} \label{sec:conc}
We present an automaton-theoretic approach for generating collision-free paths for a multi-agent system such that each agent satisfies an individual task encoded as a TWTL specification in finite time. 
The proposed approach guarantees both collision avoidance among agents (and static obstacles) and the satisfaction of the individual TWTL specifications (with a finite relaxation) given some mild assumptions on the environment. Simulation and experimental results show that the proposed approach can be used in real-time applications and scales well with increasing number of agents. Future work may include to relax the assumption for environment connectivity and ensure collision avoidance in more cluttered environments.

\bibliography{ref}

\end{document}